\setlist{nolistsep}
\newcommand{\argmin}{\operatornamewithlimits{argmin}}
\newcommand{\circR}{\operatornamewithlimits{circ}}
\newtheorem{theorem}{Theorem}
\newtheorem{proposition}{Proposition}
\newcommand{\bR}{\mathbf{R}}
\newcommand{\br}{\mathbf{r}}
\newcommand{\bT}{\mathbf{\Theta}}
\newcommand{\bx}{\mathbf{x}}
\newcommand{\by}{\mathbf{y}}
\newcommand{\bz}{\mathbf{z}}
\newcommand{\bw}{\mathbf{w}}
\newcommand{\bt}{\bm{\theta}}
\renewcommand{\[}{\begin{eqnarray*}}
\renewcommand{\]}{\end{eqnarray*}}
\begin{document} 
\title{Compact Nonlinear Maps and Circulant Extensions}
\date{}

\author[1]{Felix X. Yu\thanks{yuxinnan@ee.columbia.edu}}
\author[2]{Sanjiv Kumar}
\author[2]{Henry Rowley}
\author[1]{Shih-Fu Chang}
\affil[1]{Department of Electrical Engineering, Columbia University}
\affil[2]{Google Research}
\maketitle

\begin{abstract}
Kernel approximation via nonlinear random feature maps is widely used in speeding up kernel machines. There are two main challenges for the conventional kernel approximation methods. First, before performing kernel approximation, a good kernel has to be chosen. Picking a good kernel is a very challenging problem in itself. 
Second, high-dimensional maps are often required in order to achieve good performance. This leads to high computational cost in both generating the nonlinear maps, and in the subsequent learning and prediction process. 
In this work, we propose to optimize the nonlinear maps directly with respect to the classification objective in a data-dependent fashion. The proposed approach achieves kernel approximation and kernel learning in a joint framework. This leads to much more compact maps without hurting the performance. As a by-product, the same framework can also be used to achieve more compact kernel maps to approximate a known kernel. We also introduce  Circulant Nonlinear Maps, which uses a circulant-structured projection matrix to speed up the nonlinear maps for high-dimensional data.
\end{abstract}

\section{Introduction}

Kernel methods such as the Support Vector Machines (SVMs) \cite{cortes1995support} are widely used in machine learning to provide nonlinear decision function. The kernel methods use a positive-definite kernel function $K$ to induce an implicit nonlinear map $\phi$ such that $K(\bx, \by) = \langle \phi(\bx), \phi(\by) \rangle$, $\bx, \by \in \mathbb{R}^d$. This implicit feature space could potentially be an infinite dimensional space. Fortunately, kernel methods allow one to utilize the power of these rich feature spaces without explicitly working in such high dimensions.
Despite their popularity, the kernel machines come with high computational cost due to the fact that at the training time it is necessary to compute a large kernel matrix of size $N \times N$ where $N$ is the number of training points. Hence the overall training complexity varies from $O(N^2)$ to $O(N^3)$, which is prohibitive when training with millions of samples. Testing also tends to be slow due to the linear growth in the number of support vectors with training data, leading to $O(Nd)$ complexity for $d$-dimensional vectors.

On the other hand, linear SVMs are appealing for large-scale applications since they can be trained in $O(N)$ time \cite{joachims2006training, fan2008liblinear, shalev2011pegasos} and applied in $O(d)$ time, independent of $N$. Hence, if the input data can be mapped nonlinearly into a compact feature space explicitly, one can utilize fast training and testing of linear methods while still preserving the expressive power of kernel methods. 

Following this reasoning, kernel approximation via explicit nonlinear maps has become a popular strategy for speeding up kernel machines \cite{rahimi2007random}. Formally, given a kernel $K(\bx, \by)$, kernel approximation aims at finding a nonlinear map $Z(\cdot)$, such that
\begin{equation*}
K(\bx, \by) \approx Z(\bx)^T Z(\by)
\end{equation*}

However, there are two main issues with the existing nonlinear mapping  methods. Before the kernel approximation, a ``good'' kernel has to be chosen. Choosing a good kernel is perhaps an even more challenging problem than approximating a known kernel. In addition, the existing methods are designed to approximate the kernel in the whole space independent on the data. As a result, the feature mapping often needs to be high-dimensional in order to achieve low kernel approximation error. 

In this work, we propose an alternative formulation that optimizes the nonlinear maps directly in a data-dependent fashion. Specifically, we adopt the Random Fourier Feature  framework  \cite{rahimi2007random} for approximating positive definite shift-invariant kernels. Instead of generating the parameter of the nonlinear map randomly from a distribution, we learn the parameters by minimizing the classification loss based on the training data (Section \ref{sec:cnm}). The proposed method can be seen as approximating an ``optimal kernel'' for the classification task. The method results in significantly more compact maps with very competitive classification performance. As a by-product, the same framework can also be used to achieve compact kernel approximation, if the goal is to approximate some predefined kernels (Section \ref{sec:kernel_app}). The proposed compact nonlinear maps are fast to learn, and compare favorably to the baselines. 
In addition, to make the method scalable for very high-dimensional data, we propose to use circulant structured projection matrices in the nonlinear maps (Section \ref{sec:circulant_ker}). 
This further improves the computational complexity from $\mathcal{O}(kd)$ to $\mathcal{O}(k \log d) $ and the space complexity from $\mathcal{O}(k d)$ to $\mathcal{O}(k)$, where $k$ is the number of nonlinear maps, and $d$ is the input dimensionality. 

\section{Related Works}

\textbf{Kernel Approximation.}
Following the seminal work on explicit nonlinear feature maps for approximating positive definite shift-invariant kernels~\cite{rahimi2007random}, nonlinear mapping techniques have been  proposed to approximate other forms of kernels such as the polynomial kernel \cite{kar2012random, pham2013fast}, generalized RBF kernels \cite{sreekanth2010generalized}, intersection kernels \cite{maji2009max}, additive kernels \cite{vedaldi2012efficient}, skewed multiplicative histogram kernels \cite{li2010random}, and semigroup kernel \cite{yang2014random}. Techniques have also been proposed to improve the speed and compactness of kernel approximations by using structured projections \cite{le2013fastfood}, better quasi Monte Carlo  sampling \cite{yang2014quasi}, binary code \cite{zhang2013new, mu2010weakly}, and dimensionality reduction \cite{hamid2014compact}. Our method in this paper is built upon the Random Fourier Feature \cite{rahimi2007random} for approximating shift-invariant kernel, a widely used kernel type in machine learning.
Besides explicit nonlinear maps, kernel approximation can also be 
achieved using sampling-based low-rank approximations of the kernel matrices such as the Nystrom method \cite{williams2001using, drineas2005nystrom, kumar2009sampling}. In order for these approximations to work well, the eigenspectrum of the kernel matrix should have a large gap \cite{yang2012nystrom}.

\textbf{Kernel Learning.}
There have been significant efforts in learning a good kernel for the kernel machines. Works have been proposed to optimize the hyperparameters of a kernel function \cite{chapelle2002choosing, keerthi2006efficient}, and  finding the best way of combining multiple kernels, \emph{i.e.}, Multiple Kernel Learning (MKL) \cite{bach2004multiple, argyriou2006dc, gehler2008infinite, cortes2009learning}. A summary of MKL can be found in \cite{gonen2011multiple}.
Related to our work, \cite{buazuavan2012fourier, ghiasi2010learning} propose to optimize shift-invariant kernels. Different from the above, the proposed approach can be seen as learning an optimal kernel by directly optimizing its nonlinear maps. Therefore, it is a joint kernel approximation and kernel learning. 

\textbf{Fast Nonlinear Models.}
Besides kernel approximation, there have been other types of works aiming at speeding up kernel machine \cite{bottou2007large}.
Such techniques include decomposition methods \cite{osuna1997support, CC01a}, sparsifying kernels \cite{achlioptas2001fast}, limiting the number of support vectors \cite{keerthi2006building, pavlov2000towards}, and low-rank approximations \cite{fine2002efficient, bach2005predictive}. 
None of the above methods can be scaled to truly large-scale data.
Another alternative is to consider the local structure of the data to train and apply the kernel machines locally \cite{ladicky2011locally, hsieh2014divide, jose2013local, hsieh2014fast}. However, partitioning becomes unreliable in high-dimensional data.
Our work is also related to shallow neural networks as we will discuss in later part of this paper. 

\section{Random Fourier Features: A Review}

We begin by reviewing the Random Fourier Feature method~\cite{rahimi2007random}, which is widely used in approximating positive-definite shift-invariant kernels. 
A kernel $K$ is shift-invariant, if $K(\bx,\by) = K(\bz)$ where $\bz = \bx - \by$. For a function $K(\bz)$ which is positive definite on $\mathbb{R}^d$, it guarantees that the Fourier transform of $K(\bz)$, 
\begin{equation}
\mathcal{K}({\bt}) = \frac{1}{(2\pi)^{d/2}}\int d^d {\bf z}\, K({\bf z})\, e^{i  {\bt}^T{\bf z}}\,,
\end{equation}
admits an interpretation as a probability distribution. This fact follows from Bochner's celebrated characterization of positive definite functions,

\begin{theorem}
\label{thm:Bochner}
\cite{bochner1955harmonic} A function $K\in C(\mathbb{R}^d)$ is positive definite on $\mathbb{R}^d$ if and only if it is the Fourier transform of a finite non-negative Borel measure on $\mathbb{R}^d$.
\end{theorem}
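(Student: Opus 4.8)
\quad The plan is to establish the two implications separately, beginning with the easy one. Suppose first that $K(\bz)=\int_{\mathbb{R}^d} e^{i\bt^T\bz}\,d\mu(\bt)$ for some finite non-negative Borel measure $\mu$. Continuity of $K$ is immediate from dominated convergence, since $|e^{i\bt^T\bz}|=1$ and $\mu(\mathbb{R}^d)<\infty$. For positive definiteness, fix $\bx_1,\dots,\bx_n\in\mathbb{R}^d$ and $c_1,\dots,c_n\in\mathbb{C}$; exchanging the finite sum with the integral (legitimate as $\mu$ is finite) gives
\begin{equation*}
\sum_{j,k} c_j\,\overline{c_k}\,K(\bx_j-\bx_k)=\int_{\mathbb{R}^d}\Big|\sum_{j=1}^n c_j e^{i\bt^T\bx_j}\Big|^2 d\mu(\bt)\ \ge\ 0,
\end{equation*}
so $K$ is positive definite.

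The content of the theorem is the converse, and here the plan is the classical Gaussian-regularization argument. First I would record the elementary consequences of positive definiteness coming from the $2\times2$ matrix $[K(\bx_j-\bx_k)]_{j,k\le 2}$: namely $K(0)\ge 0$, $K(-\bz)=\overline{K(\bz)}$, and $|K(\bz)|\le K(0)$; the degenerate case $K(0)=0$ forces $K\equiv 0$, so I may normalize $K(0)=1$. For $\sigma>0$ set
\begin{equation*}
f_\sigma(\bt)\ :=\ \frac{1}{(2\pi)^d}\int_{\mathbb{R}^d} K(\bz)\, e^{-i\bt^T\bz}\, e^{-\sigma^2\|\bz\|^2/2}\, d\bz .
\end{equation*}
The crux is to show $f_\sigma\ge 0$. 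I would do this by rewriting the Gaussian weight, via $\|\bx\|^2+\|\by\|^2=\tfrac{1}{2}\big(\|\bx-\by\|^2+\|\bx+\by\|^2\big)$ and integrating out $\bx+\by$, so that for a suitable Gaussian $\varphi_\sigma$ and a positive constant $c_\sigma$,
\begin{equation*}
f_\sigma(\bt)=c_\sigma\int_{\mathbb{R}^d}\!\int_{\mathbb{R}^d} K(\bx-\by)\,e^{-i\bt^T(\bx-\by)}\,\varphi_\sigma(\bx)\,\overline{\varphi_\sigma(\by)}\,d\bx\,d\by .
\end{equation*}
This double integral is a limit of Riemann sums $\sum_{j,k}a_j\overline{a_k}\,K(\bx_j-\bx_k)$ with $a_j=e^{-i\bt^T\bx_j}\varphi_\sigma(\bx_j)\,\Delta$, each nonnegative by positive definiteness, and the passage to the limit is justified by the continuity of $K$ together with the rapid decay of $\varphi_\sigma$; hence $f_\sigma\ge 0$.

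Next I would invoke Fourier inversion. The function $K(\bz)e^{-\sigma^2\|\bz\|^2/2}$ is continuous and integrable; integrating $f_\sigma$ against a second Gaussian $e^{-\varepsilon\|\bt\|^2/2}$ and letting $\varepsilon\downarrow0$ gives $\int f_\sigma = K(0)=1$ (so $f_\sigma\in L^1$), and inversion then yields $K(\bz)\,e^{-\sigma^2\|\bz\|^2/2}=\int_{\mathbb{R}^d} e^{i\bt^T\bz}\,f_\sigma(\bt)\,d\bt$. Thus $d\mu_\sigma:=f_\sigma\,d\bt$ is a probability measure whose characteristic function is $\bz\mapsto K(\bz)e^{-\sigma^2\|\bz\|^2/2}$. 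As $\sigma\to 0$ these characteristic functions converge pointwise to $K$, which is continuous at the origin with $K(0)=1$; by L\'evy's continuity theorem (equivalently, tightness plus Prokhorov) a subsequence $\mu_{\sigma_n}$ converges weakly to a probability measure $\mu$, and passing to the limit gives $K(\bz)=\int_{\mathbb{R}^d} e^{i\bt^T\bz}\,d\mu(\bt)$; for general $K(0)=c$ one uses $c\,\mu$. This exhibits $K$ as the Fourier transform of a finite non-negative Borel measure, completing the proof.

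The main obstacle is the positivity claim $f_\sigma\ge 0$: this is the one place where positive definiteness is genuinely used, and making the discrete-to-continuous passage fully rigorous (uniform control of the Riemann sums over compacts and integrability of the tails) takes some care. The concluding limit argument is routine if L\'evy's continuity theorem may be cited; otherwise tightness of $\{\mu_\sigma\}$ must be verified by hand from the continuity of $K$ at $\bz=0$.
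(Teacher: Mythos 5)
The paper does not prove this statement at all: it is Bochner's classical theorem, imported by citation to \cite{bochner1955harmonic}, so there is no in-paper argument to compare against. Your sketch is the standard textbook proof and is essentially sound: the forward direction via the identity $\sum_{j,k}c_j\overline{c_k}K(\bx_j-\bx_k)=\int|\sum_j c_je^{i\bt^T\bx_j}|^2d\mu(\bt)$ is correct, and the converse via Gaussian regularization — positivity of $f_\sigma$ from Riemann-sum approximation of the double integral, $\int f_\sigma=K(0)$ by a second Gaussian cutoff, $L^1$ Fourier inversion, and then L\'evy's continuity theorem as $\sigma\downarrow 0$ — is the classical route. You correctly flag the two places needing care (the discrete-to-continuous passage for $f_\sigma\ge 0$, and tightness if L\'evy is not cited wholesale); with those filled in as you describe, the argument is complete. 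One cosmetic point: your convention $K(\bz)=\int e^{i\bt^T\bz}\,d\mu(\bt)$ differs by a sign and normalization from the paper's Equations (1)--(2), but since positive definiteness and the class of finite non-negative measures are invariant under $\bt\mapsto-\bt$, this is immaterial.
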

\vspace{-0.1cm}
A consequence of Bochner's theorem is that the inverse Fourier transform of $\mathcal{K}({\bt})$, \emph{i.e.}, $K(\bz)$, can be interpreted as the computation of an expectation, \emph{i.e.},
\begin{align}
\label{eqn:Kz_sample}
 &K({\bf z}) = \frac{1}{(2\pi)^{d/2}}\int d^d {\bt}\, \mathcal{K}({\bt})\, e^{-i  {\bt}^T{\bf z}}\\
=&  E_{{\bt} \sim p({\bt})}\, e^{-i {\bt}^T (\bf{x-y})} \nonumber\\
=& 2\, E_{\substack{\bt\sim p(\bt) \\ b\sim U(0,2\pi)}} \big[\cos({\bt}^T {\bf x} + b) \cos({\bt}^T{\bf y}+ b) \big]\ \nonumber,
\end{align}
where $p({\bt}) = (2\pi)^{-d/2} \mathcal{K}({\bt})$ and $U(0, 2\pi)$ is the uniform distribution on $[0, 2\pi)$. If the above expectation is approximated using Monte Carlo with $k$ random samples $\{\bt_i, b_i\}_{i=1}^k$,  then $K(\bx, \by) \approx \langle Z(\bx), Z(\by) \rangle$ with 
\begin{equation}
Z(\bx) = \sqrt{2 / k} \left[ \cos(\bt_1^T \bx + b_1), ..., \cos(\bt_k^T \bx + b_k) \right]^T \,.
\end{equation}

Such Random Fourier Features have been used to approximate different types of positive definite shift-invariant kernels, including the Gaussian kernel, the Laplacian kernel, and the Cauchy kernel \cite{rahimi2007random}. 
Despite the popularity and success of Random Fourier Feature, the notable issues for all kernel approximation methods are that: 
\begin{itemize}[leftmargin=*]
\item Before performing the kernel approximation, a known kernel has to be chosen. This is a very challenging task.  As a matter of fact, the classification performance is influenced by both the quality of the kernel, and the error in approximating it. Therefore, better kernel approximation in itself may not lead to better classification performance. 
\item The Monte-Carlo sampling technique tries to approximate the kernel for \emph{any} pair of points in the entire input space without considering the data distribution. This usually leads to very high-dimensional maps in order to achieve low kernel approximation error everywhere. 
\end{itemize}

In this work, we follow the Random Fourier Feature framework. Instead of sampling the kernel approximation parameters $\bt_i$ and $b_i$ from a probability distribution to approximate a known kernel, we propose to optimize them directly with respect to the classification objective. This leads to very compact maps as well as higher classification accuracy. 

\section{The Compact Nonlinear Map (CNM)}
\label{sec:cnm}
\subsection{The Framework}
Consider the following feature maps, and the resulted kernel based on the Random Fourier Features proposed in~\cite{rahimi2007random}\footnote{For simplicity, we do not consider the bias term which can be added implicitly by augmenting the  dimension to the feature $\bx$.}:
\begin{equation}
\label{eq:general}
\hat{K}_{\bT}(\bx, \by) = Z(\bx)^T Z(\by), \quad Z_i(\bx) = \sqrt{2/k}\cos(\bt_i^T \bx), \quad i = 1, ..., k.
\end{equation}
By representing $\bT = [\bt_1, \cdots, \bt_k]$, we can write 
$Z(\bx) = \cos(\bT^T \bx)$, where $\cos(\cdot)$ is the element-wise consine function.

\begin{proposition}
For any $\bT$, the kernel function $\hat{K}$, defined as $\hat{K}_{\bT}(\bx, \by) = Z(\bx)^T Z(\by)$, is a positive-definite shift-invariant kernel.
\end{proposition}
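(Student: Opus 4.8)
The plan is to separate the two claims — that $\hat K_{\bT}$ is positive definite and that it is shift invariant — and dispatch the first one directly from the structure already on the table. Since $\hat K_{\bT}(\bx,\by) = Z(\bx)^T Z(\by) = \langle Z(\bx), Z(\by)\rangle$ with $Z:\mathbb{R}^d\to\mathbb{R}^k$, for any finite set of points $\bx_1,\dots,\bx_n$ and reals $c_1,\dots,c_n$ one has $\sum_{j,l} c_j c_l\,\hat K_{\bT}(\bx_j,\bx_l) = \big\|\sum_j c_j Z(\bx_j)\big\|_2^2 \ge 0$, so every Gram matrix of $\hat K_{\bT}$ is positive semidefinite. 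Equivalently, $\hat K_{\bT}$ is the nonnegative combination $\frac{2}{k}\sum_{i=1}^k \phi_i\otimes\phi_i$ of the rank-one kernels $(\bx,\by)\mapsto \cos(\bt_i^T\bx)\cos(\bt_i^T\by)$, each of which is trivially positive definite, and positive definiteness is preserved under nonnegative combinations.

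For shift invariance I would rewrite $\hat K_{\bT}$ as a function of $\bz = \bx - \by$ alone. Applying the product-to-sum identity to each term gives $\cos(\bt_i^T\bx)\cos(\bt_i^T\by) = \tfrac12\cos(\bt_i^T(\bx-\by)) + \tfrac12\cos(\bt_i^T(\bx+\by))$; the second, non-shift-invariant summand is exactly the piece that the uniform bias $b\sim U(0,2\pi)$ averages away in the last line of~\eqref{eqn:Kz_sample} (and which the footnote's dimension-augmentation trick supplies implicitly). Once the bias is accounted for one is left with
\begin{equation*}
\hat K_{\bT}(\bx,\by) \;=\; g(\bx-\by), \qquad g(\bz) := \frac1k\sum_{i=1}^k \cos(\bt_i^T\bz),
\end{equation*}
which depends only on $\bz$ and is real and even, hence is shift invariant.

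Finally I would reconnect to Bochner's theorem (Theorem~\ref{thm:Bochner}) to phrase positive definiteness in the form appropriate to the shift-invariant setting: $g$ is the Fourier transform of the finite nonnegative Borel measure $\mu = \frac{1}{2k}\sum_{i=1}^k(\delta_{\bt_i}+\delta_{-\bt_i})$ on $\mathbb{R}^d$, so $g$ is a positive-definite function, and therefore $\hat K_{\bT}(\bx,\by) = g(\bx-\by)$ is a positive-definite shift-invariant kernel, as claimed. This also gives the expected sanity check $\hat K_{\bT}(\bx,\bx) = g(\mathbf{0}) = 1$, a constant, as a genuine shift-invariant kernel must satisfy.

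The genuinely delicate point — and the step I expect to be the main obstacle — is the passage from $\cos(\bt_i^T\bx)\cos(\bt_i^T\by)$, which is \emph{not} literally a function of $\bx-\by$, to one that is: the reduction only goes through after the implicit bias term is reinstated, i.e. it relies precisely on the averaging identity in~\eqref{eqn:Kz_sample}. Everything else — the Gram-matrix inequality, the nonnegativity and finiteness of $\mu$, and the application of Theorem~\ref{thm:Bochner} — is routine bookkeeping.
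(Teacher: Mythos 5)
Your positive-definiteness argument is correct and is exactly the ``direct computation'' the paper gestures at: the Gram matrix of an explicit finite-dimensional feature map is automatically positive semidefinite. No issues there.

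The shift-invariance step is where the real content lies, and here your proposal has a genuine gap --- though, to be fair, it is the same gap the paper itself has, and you diagnose it far more honestly. As you note, $\cos(\bt_i^T\bx)\cos(\bt_i^T\by)=\tfrac12\cos(\bt_i^T(\bx-\by))+\tfrac12\cos(\bt_i^T(\bx+\by))$, and the second summand is \emph{not} a function of $\bx-\by$; with the definition in \eqref{eq:general} (cosine only, no bias) the kernel $\hat K_{\bT}$ is therefore not shift-invariant as literally stated (e.g.\ $\hat K_{\bT}(\bx,\bx)=\tfrac{2}{k}\sum_i\cos^2(\bt_i^T\bx)$ is not constant, contradicting your own sanity check $g(\mathbf{0})=1$). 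Your proposed repair --- reinstate the bias $b_i$ and let it kill the offending term --- only works \emph{in expectation} over $b_i\sim U(0,2\pi)$, as in the last line of \eqref{eqn:Kz_sample}. For any fixed draw of $b_i$ one gets $\tfrac12\cos(\bt_i^T(\bx-\by))+\tfrac12\cos(\bt_i^T(\bx+\by)+2b_i)$, which is still not shift-invariant, so the finite-dimensional map $Z$ never exactly realizes your $g(\bz)=\tfrac1k\sum_i\cos(\bt_i^T\bz)$. An exact fix would pair each cosine with the corresponding sine feature, $Z_i(\bx)\propto(\cos(\bt_i^T\bx),\sin(\bt_i^T\bx))$, which yields $\cos(\bt_i^T(\bx-\by))$ identically and then your Bochner argument with $\mu=\tfrac{1}{2k}\sum_i(\delta_{\bt_i}+\delta_{-\bt_i})$ goes through verbatim; but that is not the map the proposition is about. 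For comparison, the paper's own proof asserts the identity $\cos(x)\cos(y)=\tfrac12(\cos(x-y)-\sin(x-y))$, which is simply false (take $x=y=0$); the correct product-to-sum formula produces exactly the $\cos(x+y)$ obstruction you identified. So: your positive-definiteness half stands, your shift-invariance half does not close, and the honest conclusion is that the proposition needs either the sine features or an expectation over the bias to be true as stated.
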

\begin{proof}
The shift-invariance follows from the fact that, for any $x, y \in  \mathbb{R}$
\begin{equation*}
\cos(x) \cos(y) = \frac{ \cos(x-y) - \sin(x-y)}{2}, \quad \text{a function of } x - y.
\end{equation*}
The positive definiteness follows from a direct computation and the definition. 
\end{proof}

In addition, it has been shown in the Bochner's theorem that such a cosine map can be used to approximate \emph{any} positive shift-invariant kernels. Therefore, if we optimize the ``kernel approximation'' parameters directly, it can be seen as approximating an optimal positive definite shift-invariant kernel for the task. In this work we consider the task of binary classification using SVM. The proposed approach can be easily extended to other scenarios such as multi-class classification and regression. 

Suppose we have $N$ samples with +1/-1 labels as training data $(\bx_1, y_1), ..., (\bx_N, y_N)$. The Compact Nonlinear Maps (CNM) jointly optimize the nonlinear map parameters $\bT$ and the linear classifier $\bw$ in a data-dependent fashion. 

\begin{equation}
\argmin_{\bw, \bT} \frac{\lambda}{2} \bw^T \bw + \frac{1}{N} \sum_{i=1}^N 
L \left( y_i, \bw^T Z(\bx_i) \right)
\label{eq:cnm_obj}
\end{equation}

In this paper, we use the hinge loss as the loss function: $L(y_i, \bw^T Z(x_i)) = \max(0, 1 - y_i \bw^T Z(x_i) )$.
%In Section \ref{sec:kernel_app}, we set the objective function to be 
%the mean square error in approximating a known kernel. In Section \ref{sec:learning} we optimize the nonlinear map directly in term of the classification objective.

\begin{algorithm}[t]
\begin{algorithmic}[1]
\STATE INPUT: initialized $\bw$, $||\bw|| < 1/\sqrt{\lambda}$.
\STATE OUTPUT: updated $\bw$.
\FOR{$t = 1$ to $T_1$}
\STATE Sample $M$ points to get $\mathcal{A}$, and compute the gradient $\nabla_{\bw}$.
\STATE $\bw \leftarrow \bw - (1 / \lambda t) \nabla_{\bw}$.
\STATE $\bw \leftarrow \min\left\{ 1, 1/ \lambda ||\bw|| \right\} \bw$.
\ENDFOR
\end{algorithmic}
\caption{Optimizing $\bw$ with fixed $\bT$}
\label{alg:pegasos}
\end{algorithm}

\subsection{The Alternating Minimization}
\label{sec:alt_min}

Optimizing Equation \ref{eq:cnm_obj} is a challenging task. A large number of parameters need to be optimized, and the problem is nonconvex. In this work, we propose to find a local solution of the optimization problem with  Stochastic Gradient Descent (SGD) in an alternating fashion. 

\textbf{For a fixed $\bT$}, the optimization of $\bw$ is simply
 the traditional linear SVM learning problem. 
 
\begin{equation}
\argmin_{\bw} \frac{\lambda}{2} \bw^T \bw + \frac{1}{N} \sum_{i=1}^N 
L \left( y_i, \bw^T Z(\bx_i) \right).
\label{eq:obj}
\end{equation}
 
We use the Pegasos procedure \cite{shalev2011pegasos} to perform SGD. In each step, we sample a small set of data points $\mathcal{A}$. The data points with non-zero loss is denotes as $\mathcal{A}_{+}$.
Therefore, the gradient can be written as 
\begin{equation}
\nabla_{\bw} = \lambda \bw - \frac{1} {|\mathcal{A}| }\sum_{(\bx, y) \in \mathcal{A}_{+}} y \cos(\bT^T \bx).
\end{equation}

Each step of the Pegasos procedure consists of gradient descent and a projection step. The process is summarized in Algorithm \ref{alg:pegasos}.

\textbf{For a fixed $\bw$}, optimizing $\bT$ becomes
\begin{equation}
\argmin_{\bw} \frac{1}{N} \sum_{i=1}^N 
L \left( y_i, \bw^T Z(\bx_i) \right).
\label{eq:obj_w}
\end{equation}

We also preform SGD with sampled mini-batches. Let the set of sampled data points be $\mathcal{A}$, the gradient can be written as

\begin{equation}
\nabla_{\bt_i} = \frac{w_i}{|\mathcal{A}|} \sum_{(\bx, y) \in \mathcal{A}_+} y \sin (\bt_i^T \bx) \bx,
\end{equation}

where $\mathcal{A}_+$ is the set of samples with non-zero loss, and $w_i$ is the $i$-th element of $\bw$.

\textbf{The overall algorithm} is shown in Algorithm \ref{alg:seq}. 
The sampled gradient descent steps are repeated to optimize $\bw$ and $\bT$ alternatively. We use a $\bT$ obtained from sampling the Gaussian distribution (same as Random Fourier Feature) as initialization.

\begin{algorithm}[t]
\begin{algorithmic}[1]
\STATE INPUT: initialized $\bT$.
\STATE OUTPUT: updated $\bT$.
\FOR{$t = 1$ to $T_2$}
\STATE Sample $M$ points to get $\mathcal{A}$, and compute the gradient $\nabla_{\bT}$.
\STATE $\bT \leftarrow \bT - (1 / \lambda t) \nabla_{\bT}$.
\ENDFOR
\end{algorithmic}
\caption{Optimizing $\bT$ with fixed $\bw$}
\label{alg:theta}
\end{algorithm}

\begin{algorithm}[t]
\begin{algorithmic}[1]
\STATE Initialize $\bT$ as the Random Fourier Feature.
\STATE Choose $\bw$ such that $||\bw|| < 1/\sqrt{\lambda}$.
\FOR{iter$=1$ to $T$}
\STATE Perform $T_1$ SGD (Pegasos \cite{shalev2011pegasos}) steps to optimize $\bw$, shown in Algorithm \ref{alg:pegasos}.
\STATE Perform $T_2$ SGD steps with to optimize $\bT$, shown in Algorithm \ref{alg:theta}.
\ENDFOR
\end{algorithmic}
\caption{The Compact Nonlinear Map (CNM)}
\label{alg:seq}
\end{algorithm}

\section{CNM for Kernel Approximation}
\label{sec:kernel_app}

In the previous section, we presented the Compact Nonlinear Maps (CNM) optimized to achieve low classification error. This framework can also be used to achieve compact kernel approximation. 
The idea is to optimize with respect to kernel approximation error. For example, given a kernel function $K$, we can minimize $\bT$ in terms of the MSE on the training data:
\begin{align}
\argmin_{\bT} \sum_{i=1}^N \sum_{j=1}^N \left(K(\bx_i, \bx_j) - Z(\bx_i)^T Z(\bx_j)\right)^2.
\end{align}

This can be used to achieve more compact kernel approximation by considering the data under consideration. Note that the ultimate goal of a  nonlinear map is to improve the classification performance -- therefore this section should be viewed as a by-product of the proposed method. 

For the optimization, we can also perform SGD similar to the former section. 
Let $\mathcal{A}$ be the set of random samples, we only need to compute the gradient in terms of $\bT$: 

\begin{align}
\nabla_{\bt_i}  = \frac{8}{k} 
\sum_{\bx, \bx' \in \mathcal{A}} 
\left( \mathcal{K}(\bx, \bx') 
 - \frac{2}{k} \cos(\bT^T \bx)^T \cos(\bt^T \bx')
 \right)  \sin(\bt_i^T \bx) \cos(\bt_i^T \bx') \bx_i.
\end{align}

\section{Discussions}

We presented Compact Nonlinear Maps (CNM) with an alternating optimization algorithm for the task of binary classification. CNM can be easily adapted to other tasks such as regression and multi-class classification. The only difference is that the gradient computation of the algorithm need to be changed. We provide below a brief discussion regarding adding regularization, and the relationship of CNM to neural networks. 

\subsection{Regularization}
One interesting fact is that, the $\cos$ function has an infinite VC dimension. In the proposed method, with a fixed $\bw$, if we only optimize $\bT$ with SGD, the magnitude of $\bT$ will grow unbounded, and this will lead to near-perfect training accuracy, and obviously,  overfitting. Therefore, a regularizer over $\bT$ should lead to better performance. We have tested different types of regularizations of $\bT$ such as the Frobenius norm, and the $\ell_1$ norm. Interestingly, such a regularization could only marginally improve the performance. It appears that early stopping in the alternating minimization framework provides reasonable regularization in practice on the tested datasets. 

\subsection{CNM as Neural Networks}
One can view the proposed CNM framework from a different angle. If we ignore the original motivation of the work \emph{i.e.}, kernel approximation via Random Fourier Features, the proposed method can be seen as a \emph{shallow} neural network with one hidden layer, with $\cos(\cdot)$ as the activation function, and the SVM objective. 
It is interesting to note that such a ``two-layer neural network'', which simulates certain shift-invariant kernels, leads to very good classification performance as shown in the experimental section. 
Under the neural network view, one can also use back-propagation as the optimization method, similar to the proposed  alternating SGD, or use other types of activation functions such as the sigmoid, and ReLU functions. However the ``network'' then will no longer correspond to a shift-invariant kernel. 

\section{Experiments}
\label{sec:exp}

\begin{table}[t]
\caption{8 UCI datasets used in the experiments}
\begin{center}
\small
\begin{tabular}{|c|c|c|c|c|}
\hline Dataset & Number of Training  & Number of Testing & Dimensionality  \\ 
\hline   \texttt{USPS}    & 7,291 & 2,007 &256 \\ 
\hline   \texttt{BANANA} & 1,000  & 4,300 &  2 \\ 
\hline   \texttt{MNIST} & 60,000 & 10,000 & 784\\ 
\hline   \texttt{CIFAR}  & 50,000 & 10,000 & 400\\ 
\hline   \texttt{FOREST} & 522,910 & 58,102& 54\\ 
\hline   \texttt{LETTER} & 12,000 & 6,000& 16\\ 
\hline   \texttt{MAGIC04} & 14,226 & 4,795 & 10\\ 
\hline   \texttt{IJCNN} & 49,990 & 91,701 & 22\\ 
\hline 
\end{tabular}
\end{center}
\label{table:dataset}
\end{table}

We conduct experiments using 8 UCI datasets summarized in Table \ref{table:dataset}. 
The size of the mini batches in the optimization are empirically set as 500. The number of SGD steps in optimizing $\bT$ and $\bw$ is set as 100. We find that satisfactory classification accuracy can be achieved within a few hundred iterations.  
The bandwidth of the RBF kernel in classification experiments, and the kernel approximation experiments is set to be $\gamma = 2/\sigma^2$, where $\sigma$ is the average distance to the 50th nearest neighbor estimated from 1,000 samples of the dataset. Further fine tuning of $\gamma$ may lead to even better performance.

\subsection{CNM for Classification}

\begin{figure*}[t]
\centering
\subfigure[\texttt{MAGIC04}]
{\includegraphics[width = 4cm]{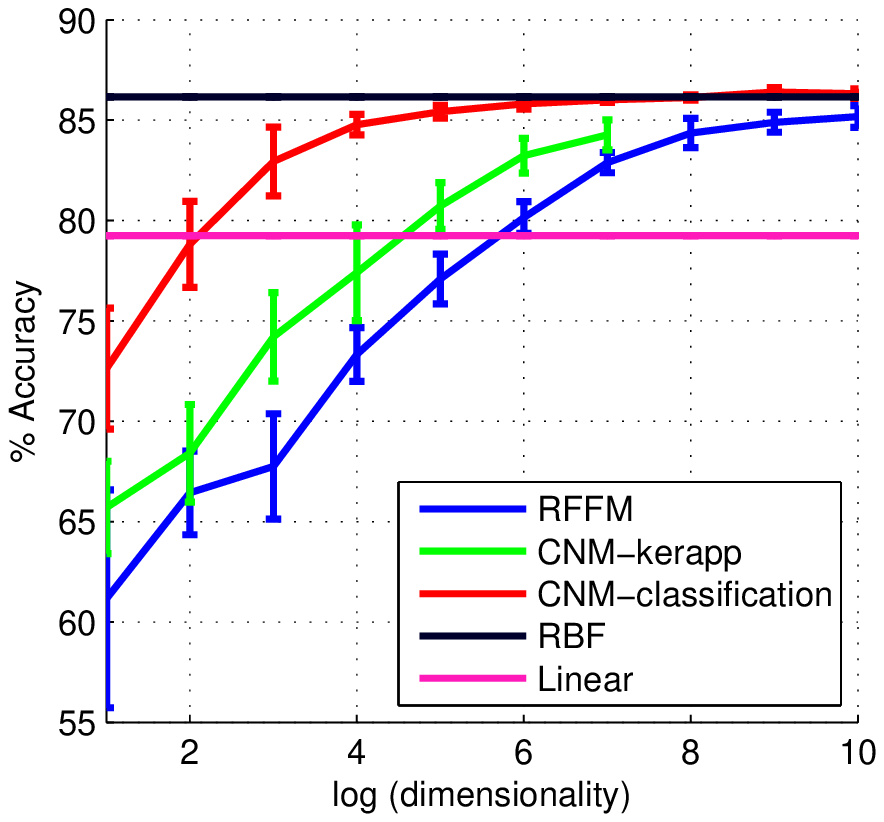}}
\subfigure[\texttt{MNIST}]
{\includegraphics[width = 4cm]{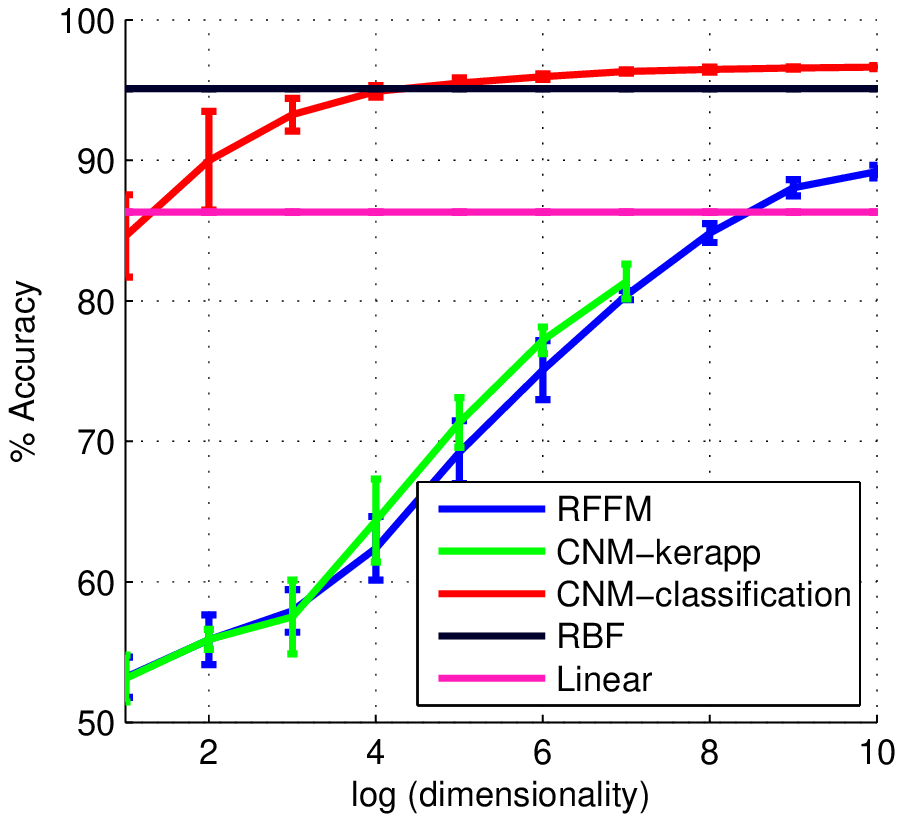}}
\subfigure[\texttt{USPS}]
{\includegraphics[width = 4cm]{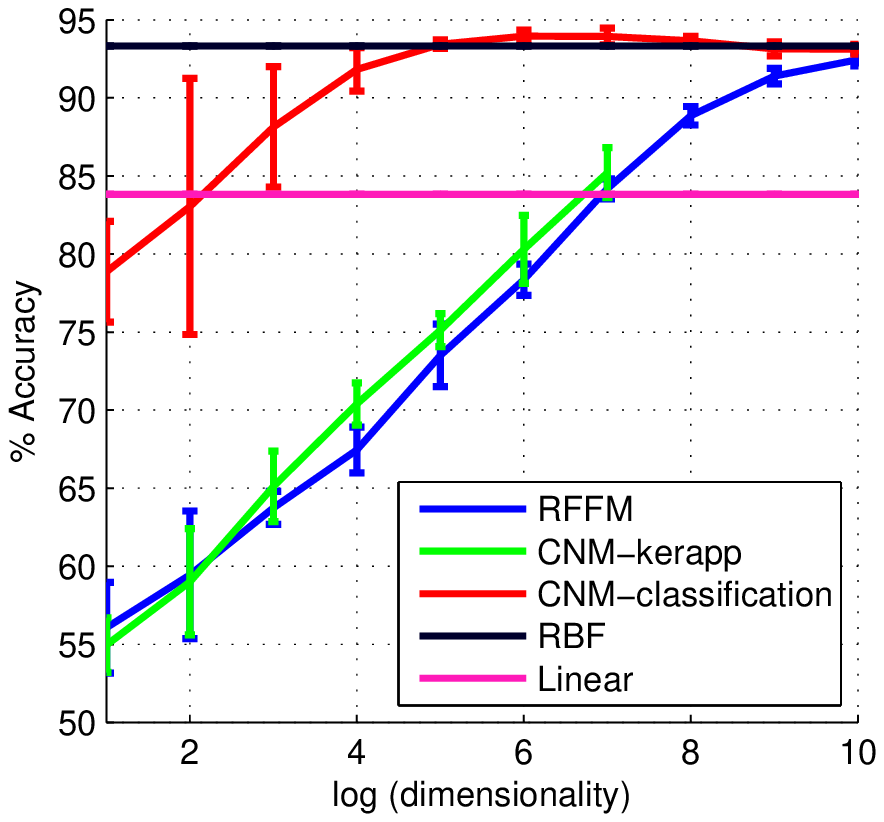}}
\subfigure[\texttt{BANANA}]
{\includegraphics[width = 4cm]{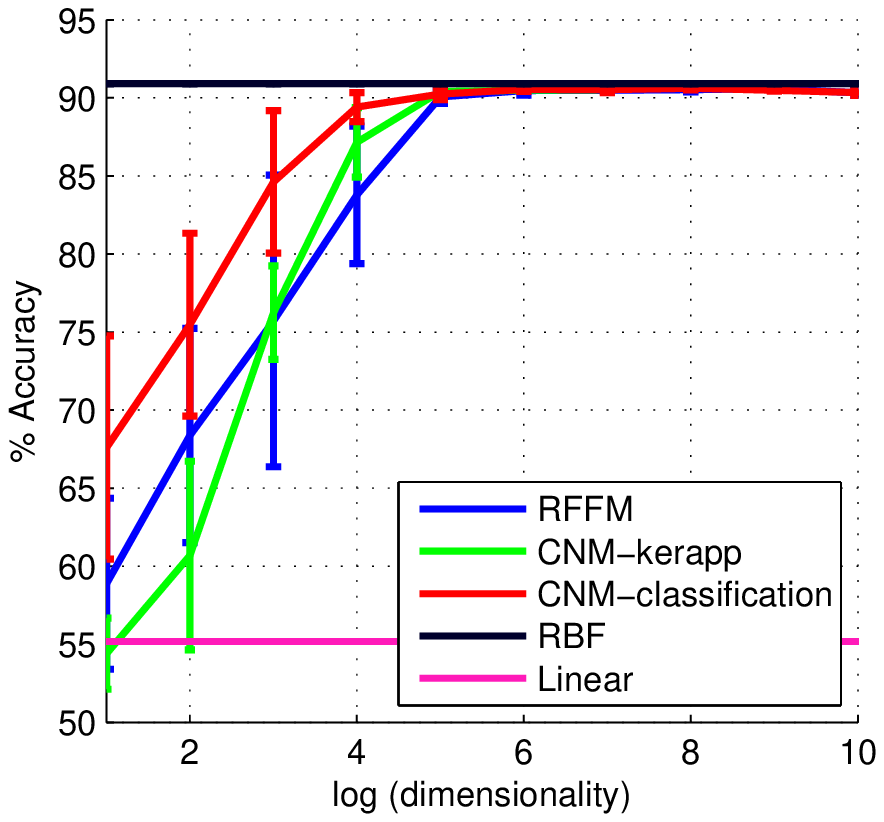}}
\subfigure[\texttt{FOREST}]
{\includegraphics[width = 4cm]{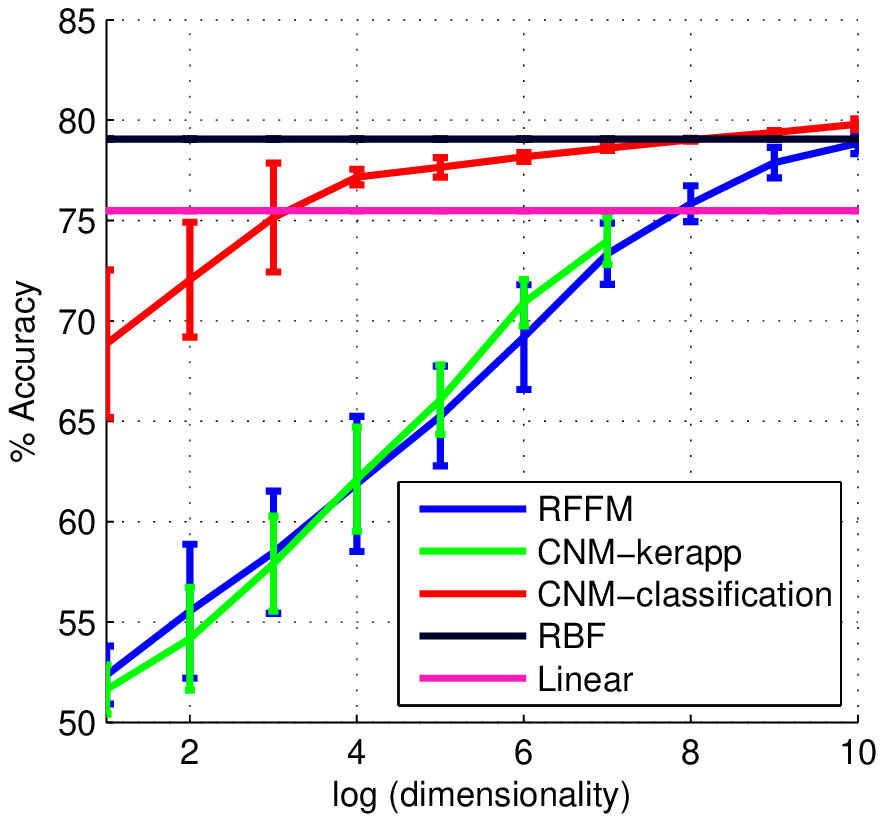}}
\subfigure[\texttt{CIFAR}]
{\includegraphics[width = 4cm]{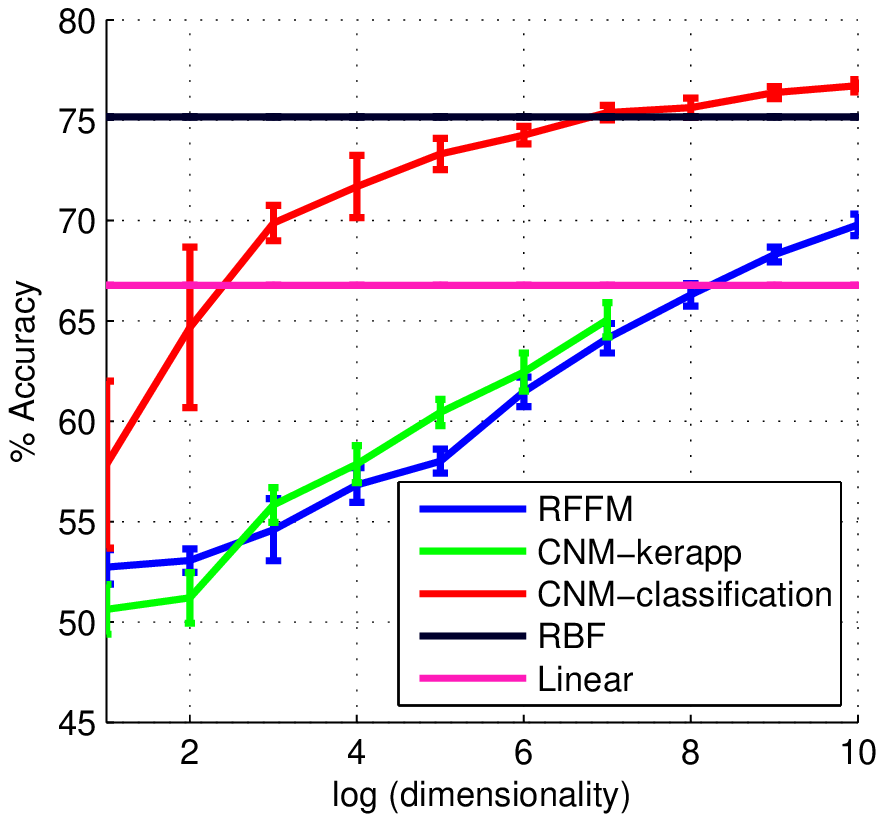}}
\subfigure[\texttt{LETTER}]
{\includegraphics[width = 4cm]{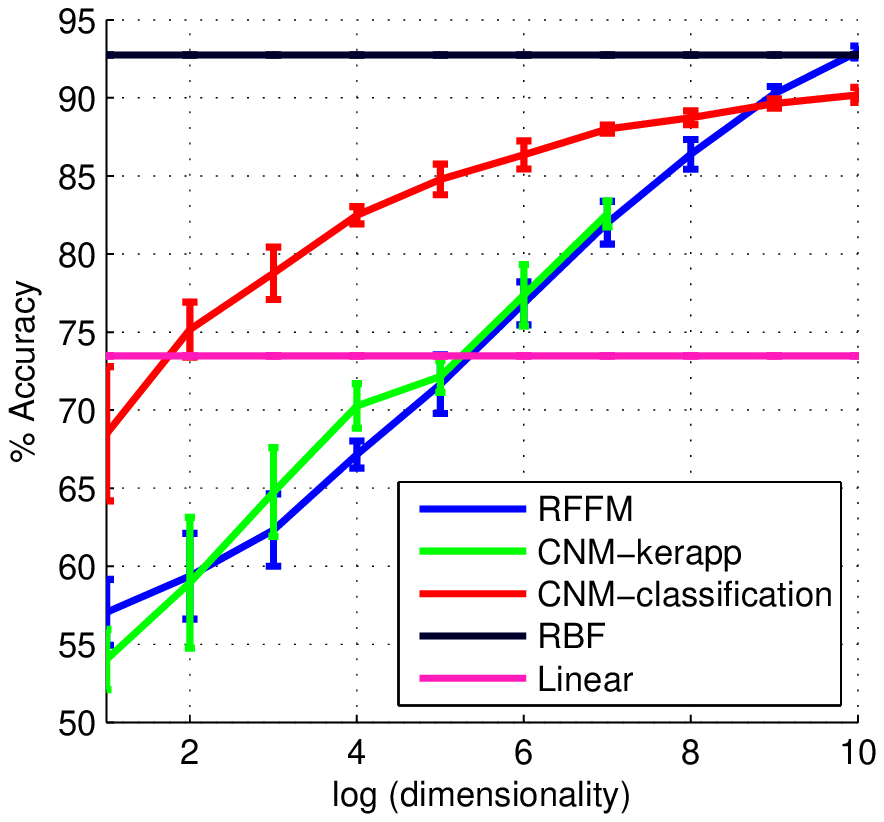}}
\subfigure[\texttt{IJCNN}]
{\includegraphics[width = 4cm]{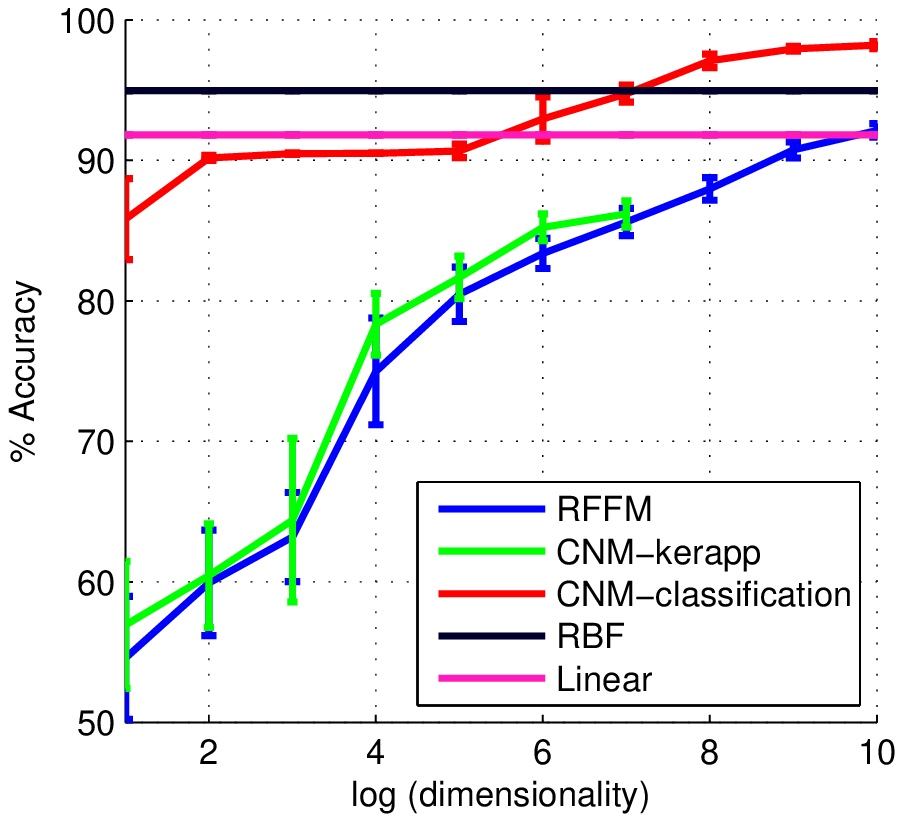}}
\caption{Compact Nonlinear Map (CNM) for classification. RFFM: Random Fourier Feature Map based on RBF kernel. CNM-kerapp: CNM for kernel approximation (Section \ref{sec:kernel_app}). CNM-classification: CNM for classification (Section \ref{sec:cnm}). RBF: RBF kernel SVM. Linear: linear SVM based on the original feature.}
\label{fig:acc}
\end{figure*}

Figure \ref{fig:acc} shows the classification accuracies. 
CNM-classification is the proposed method. We compare it with three baselines: linear SVM based on the original features (Linear), kernel SVM based on RBF (RBF), and the Random Fourier Feature method (RFFM). 
As shown in the figures, all the datasets are not linearly separable, as the RBF SVM performance is much better than the linear SVM performance. 

\begin{itemize}
\item For all the datasets, CNM is much more compact than the Random Fourier Feature to achieve the same classification accuracy. For example, on the USPS dataset, to get 90\% accuracy, the dimensionality of CNM is 8, compared to 512 of RFFM, a 60x improvement. 
\item As the dimensionality $k$ grows, accuracies of both the RFFM and CNM improve, with the RFFM approaching the RBF performance. In a few cases, the CNM performance can be even higher than the RBF performance. This is due to the fact that CNM is ``approximating'' an optimal kernel, which could be better than the fixed RBF kernel. 
\end{itemize}

\subsection{CNM for Kernel Approximation}
We conduct experiments on using the CNM framework to approximate a known kernel (Section \ref{sec:kernel_app}). The kernel approximation performance (measured by MSE) is shown in Figure \ref{fig:ker_app}. 
CNM is computed with dimensionality up to 128.
For all the datasets, CNM achieves more compact kernel approximations compared to the Random Fourier Features. 
We further use such features in the classification task. The performance is shown as the green curve (CNM-kerapp) in Figure \ref{fig:acc}. Although CNM-kerapp has lower MSE in kernel approximation than RFFM, its accuracy is only comparable or marginally better than RFFM. This verifies the fact that better kernel approximation may not necessarily lead to better classification. 

\begin{figure*}[t]
\centering
\subfigure[\texttt{MAGIC04}]
{\includegraphics[width = 4cm]{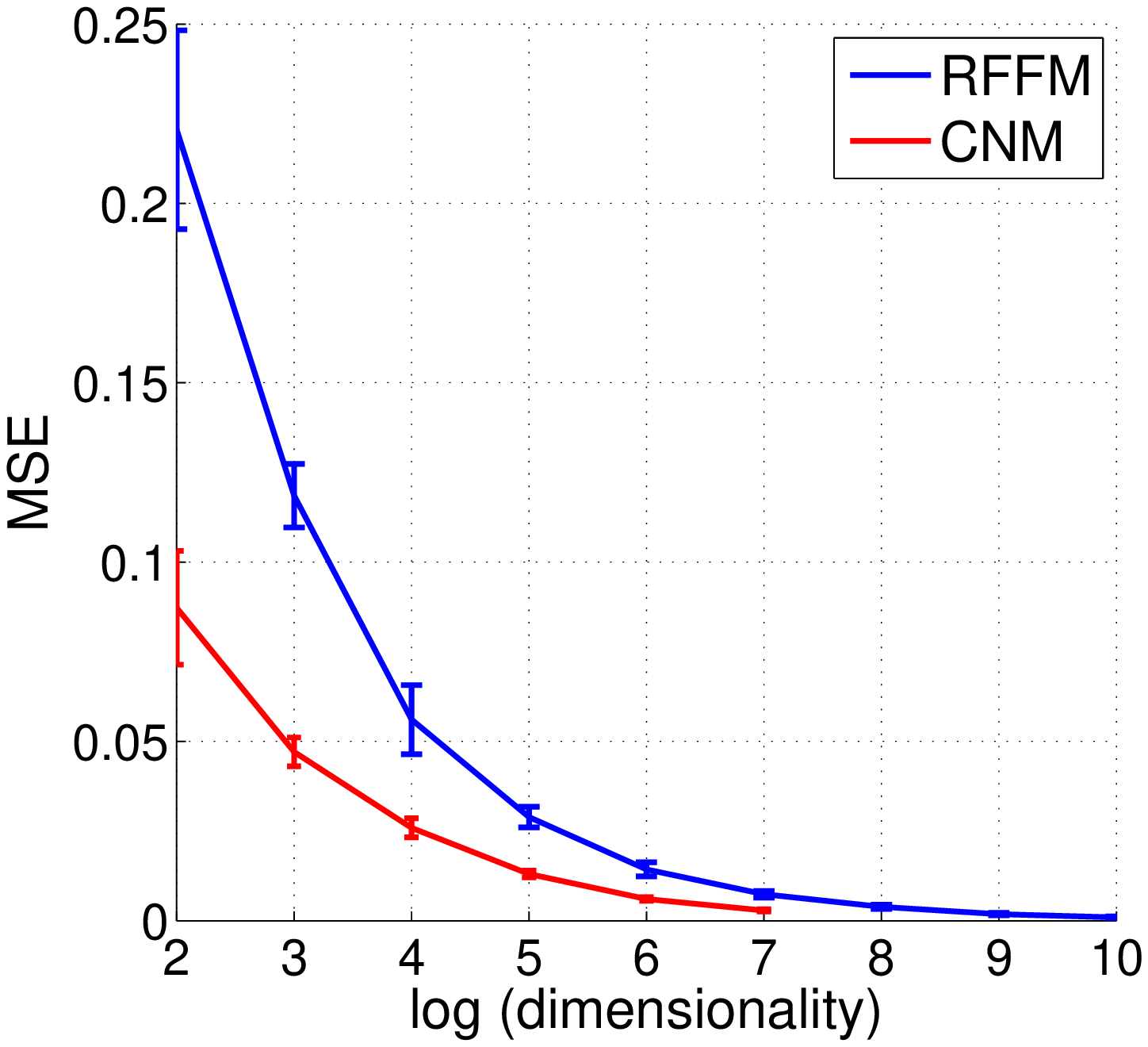}}
\subfigure[\texttt{MNIST}]
{\includegraphics[width = 4cm]{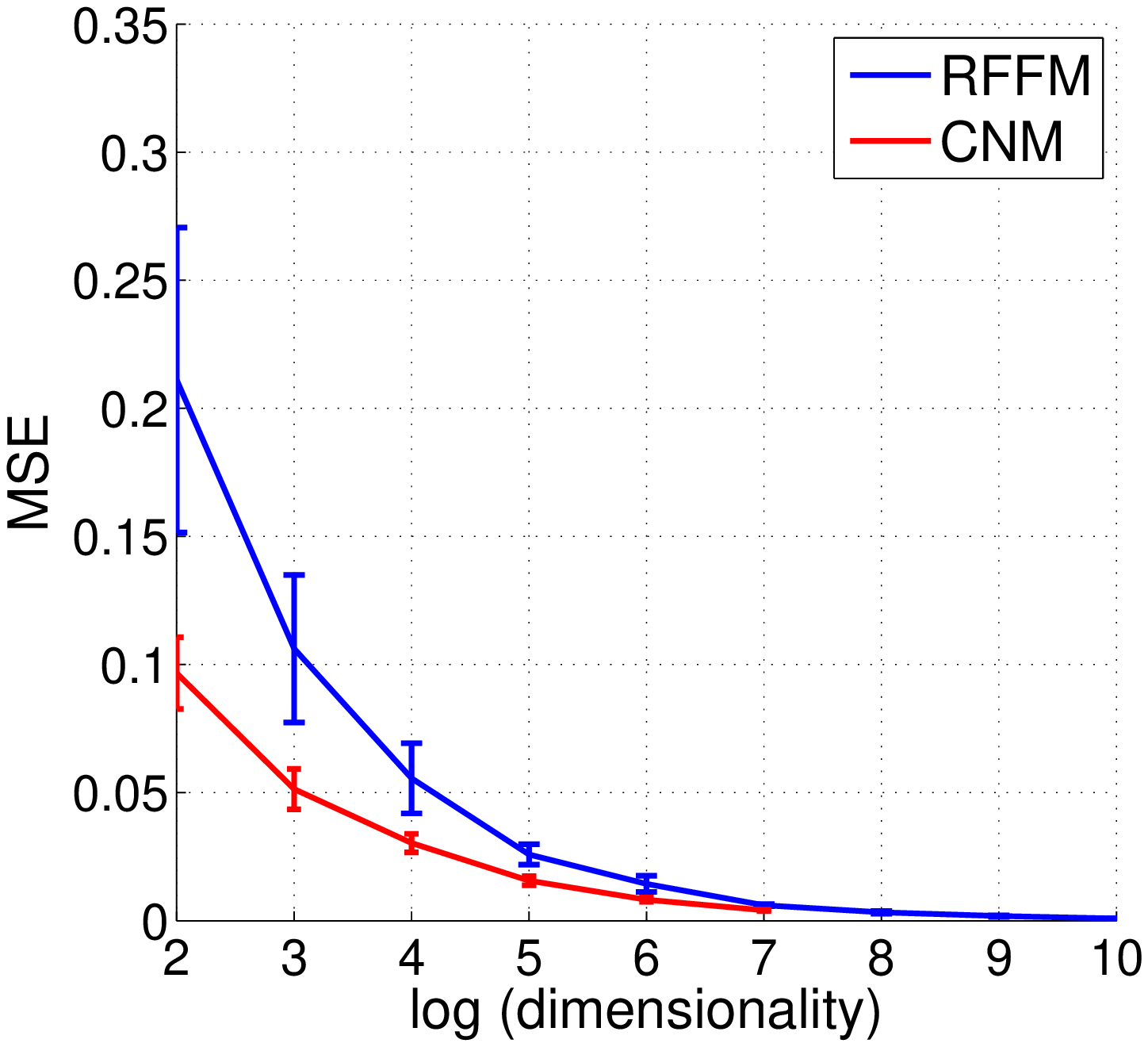}}
\subfigure[\texttt{USPS}]
{\includegraphics[width = 4cm]{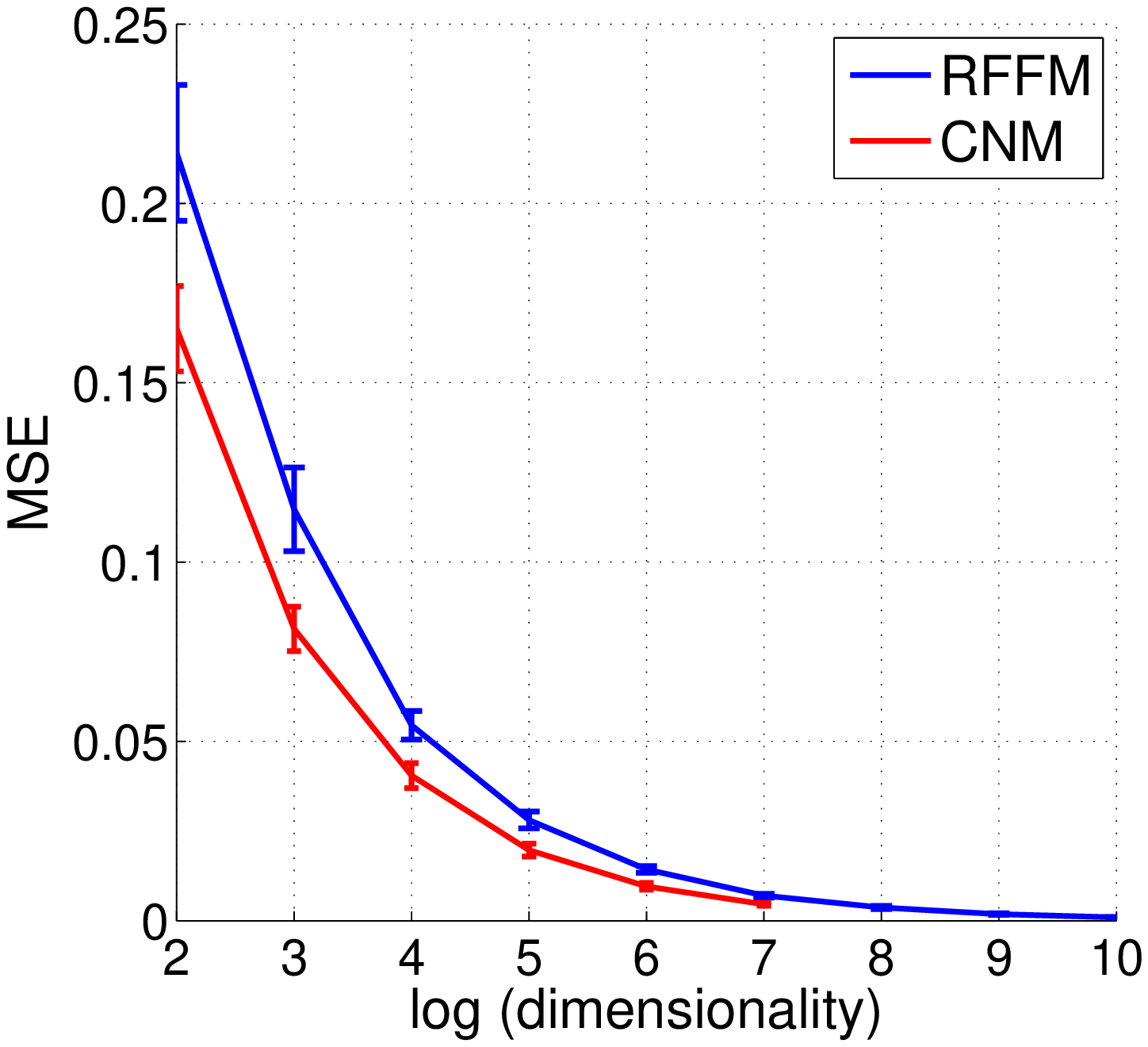}}
\subfigure[\texttt{BANANA}]
{\includegraphics[width = 4cm]{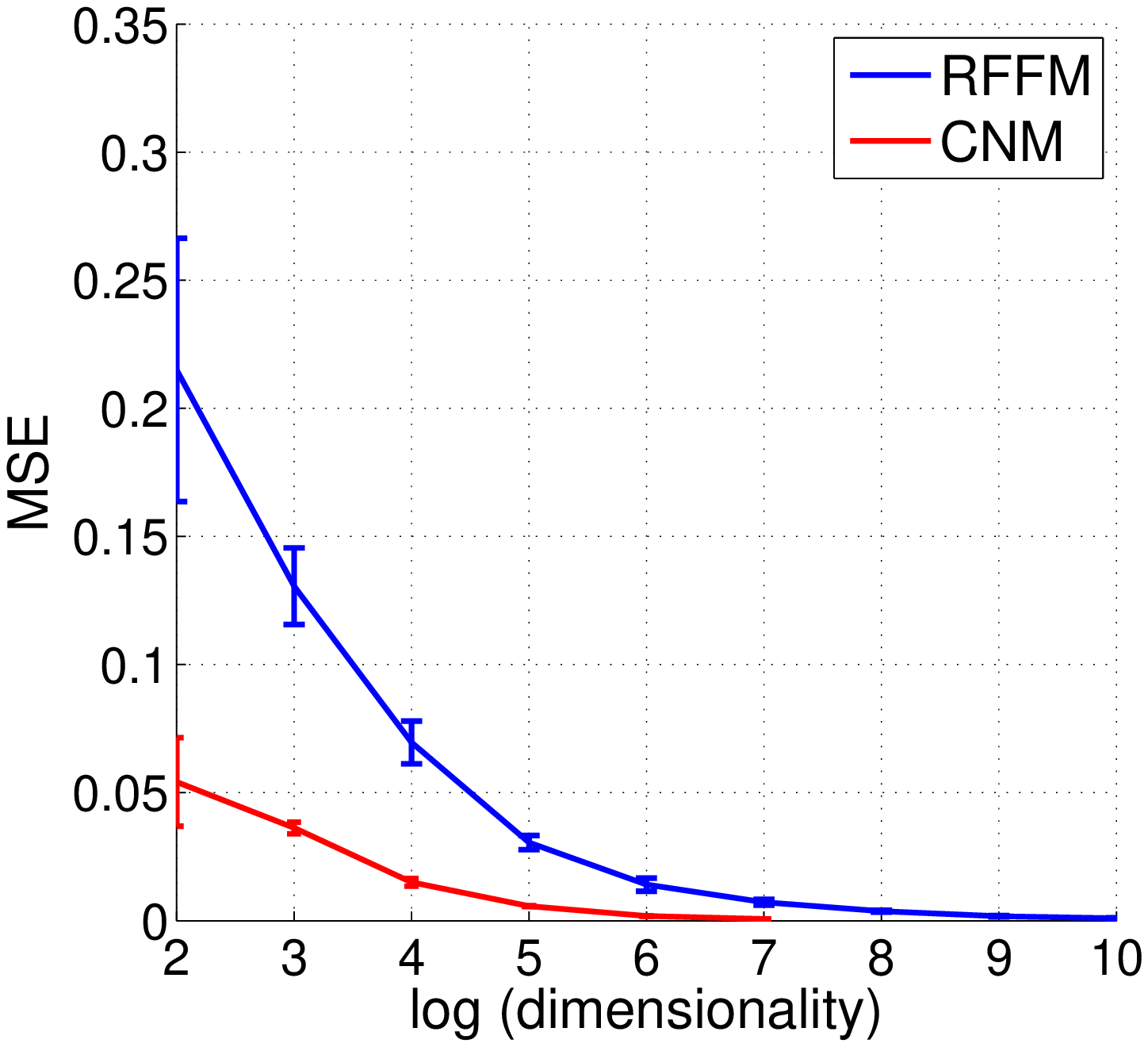}}
\subfigure[\texttt{FOREST}]
{\includegraphics[width = 4cm]{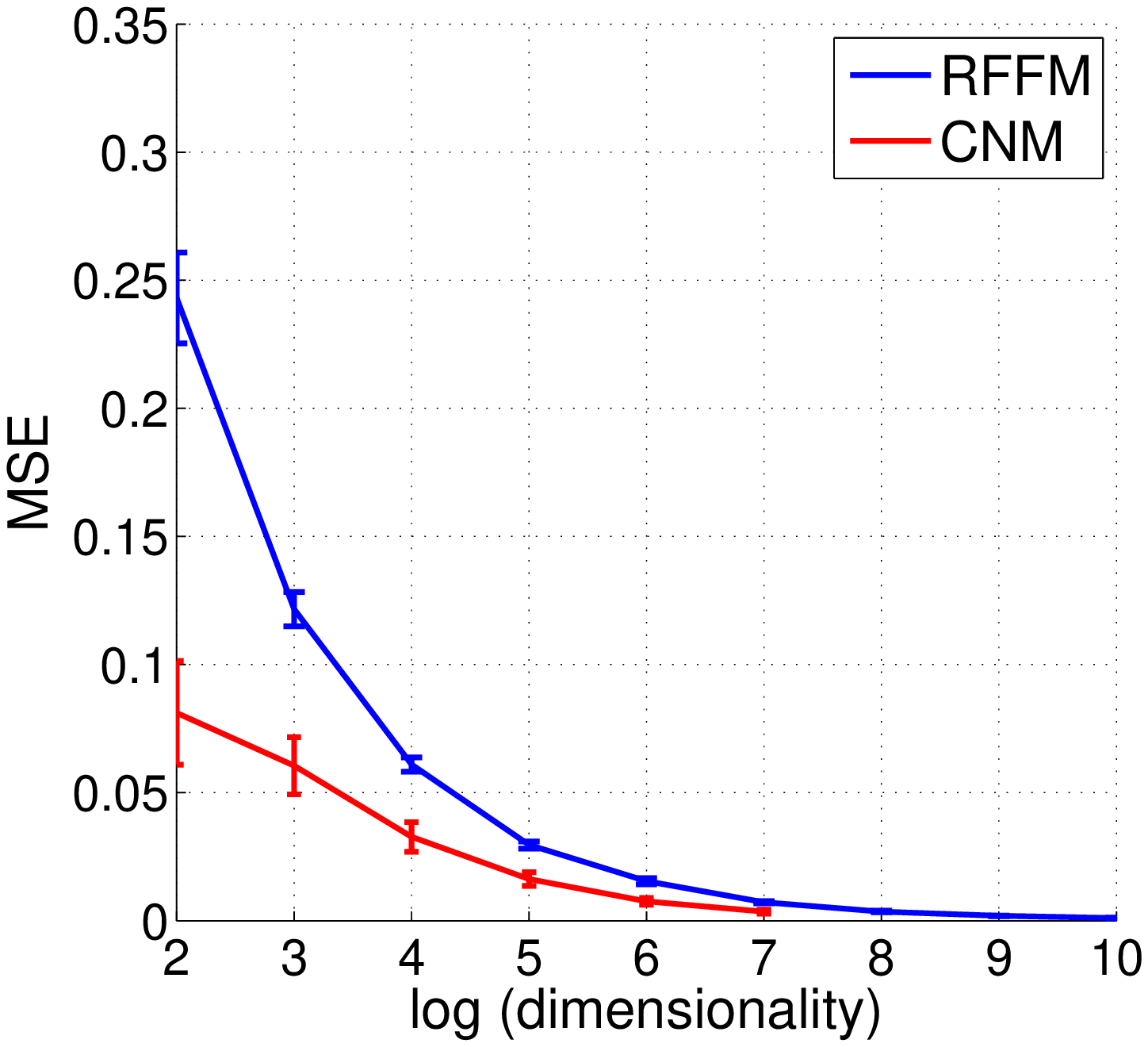}}
\subfigure[\texttt{CIFAR}]
{\includegraphics[width = 4cm]{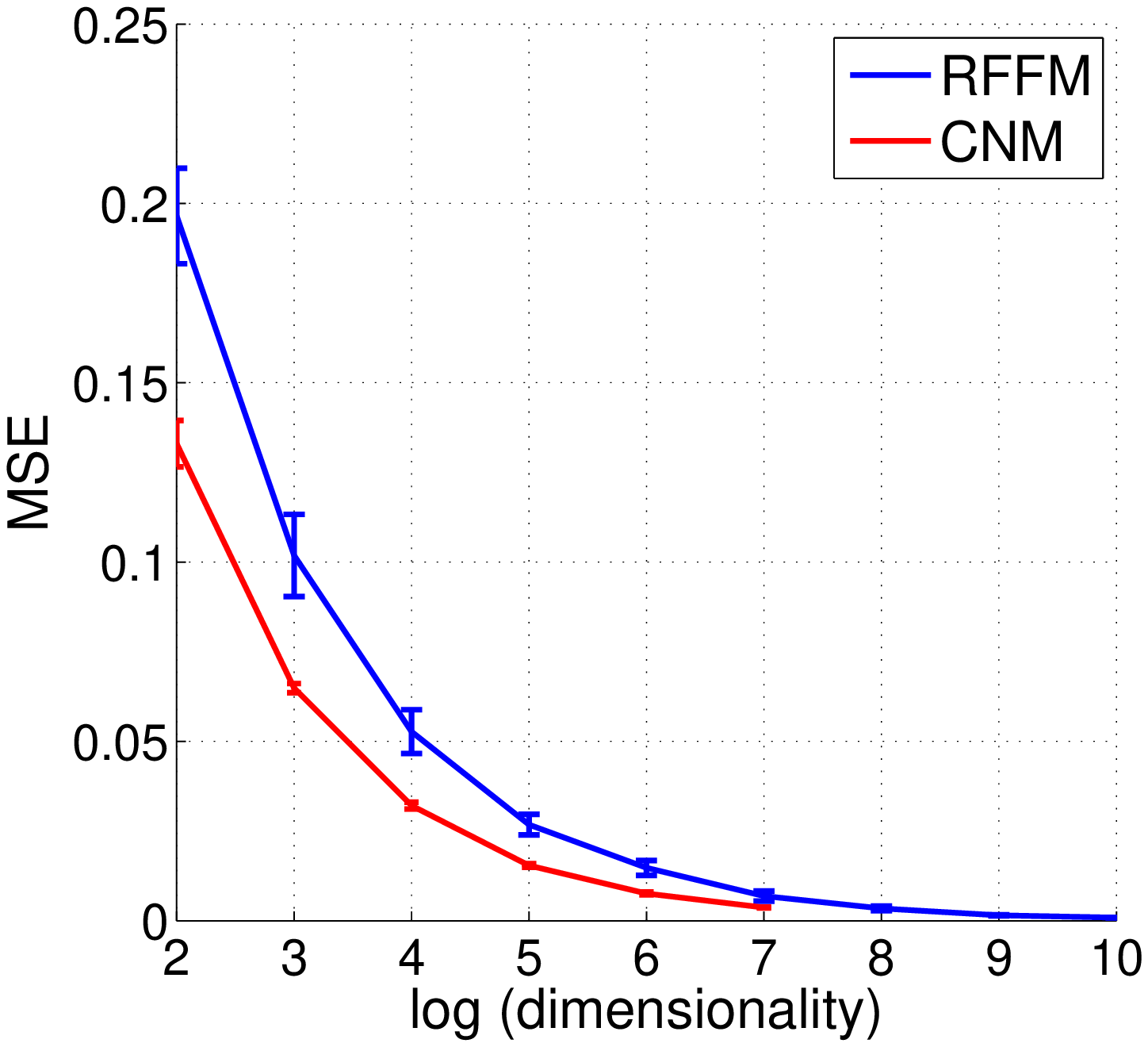}}
\subfigure[\texttt{LETTER}]
{\includegraphics[width = 4cm]{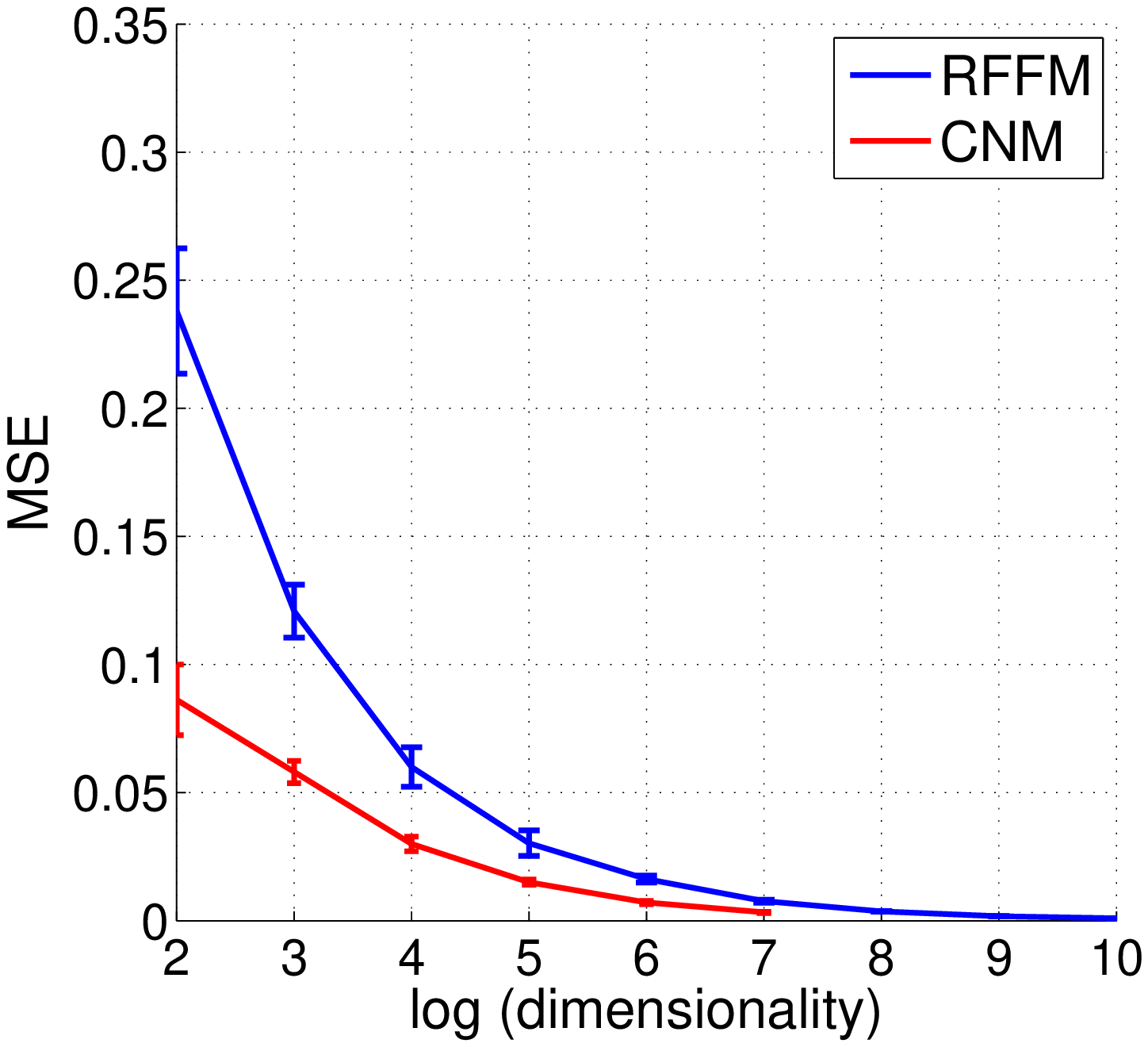}}
\subfigure[\texttt{IJCNN}]
{\includegraphics[width = 4cm]{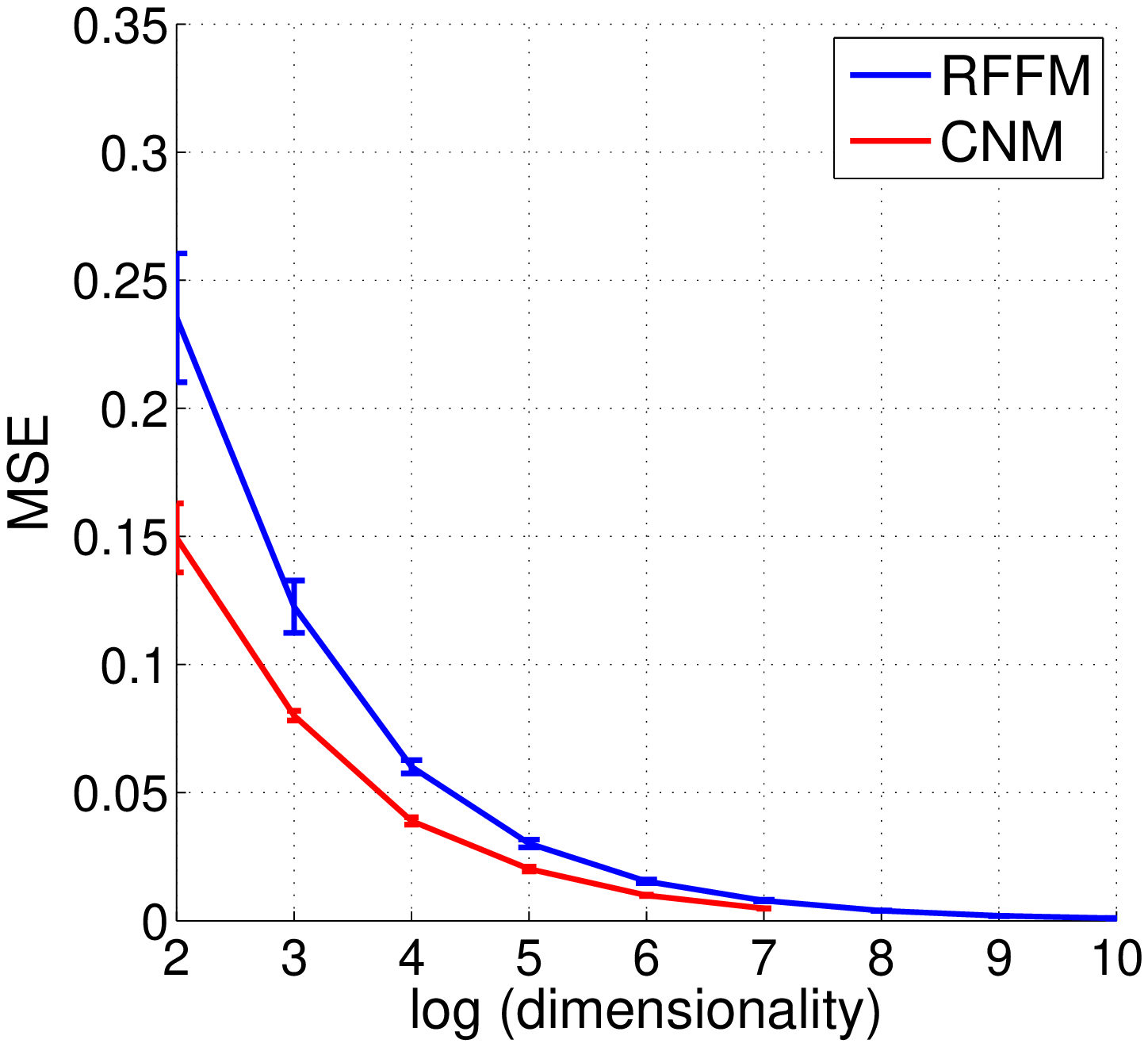}}
\caption{Compact Nonlinear Map (CNM) for kernel approximation.
RFFM: Random Fourier Feature based on RBF kernel. CNM-kerapp: CNM for kernel approximation (Section \ref{sec:kernel_app}).}
\label{fig:ker_app}
\end{figure*}

\section{An Extension: Circulant Nonlinear Maps}
\label{sec:circulant_ker}

Kernel approximation with nonlinear maps comes with an advantage that 
SVM can be trained in $\mathcal{O}(N)$, and evaluated in $\mathcal{O}(k)$ time, leading to scalable learning and inference. 
In this paper, we have presented CNM where the projection matrix of the Random Fourier Features is optimized to achieve high classification performance. 
For $d$-dimensional inputs and $k$-dimensional nonlinear maps, the computational and space complexities of both CNM and RFFM are $\mathcal{O}(kd)$. CNM comes with the advantage that $k$ can be much  smaller than that for RFFM to achieve a similar performance. 
One observation from Section \ref{sec:exp} is that though CNM can lead to much more compact maps, it still has better performance when higher-dimensional maps are used. In many situations, it is required that the number of nonlinear map $k$ is comparable to the feature dimension $d$. This will lead to both space and computation computation complexity $\mathcal{O}(d^2)$, which is not suitable for high-dimensional datasets. One natural question to ask is whether it is possible to further improve the scalability in terms of the input dimension $d$.

%In this section, we propose to make the method more scalable to $d$. Motivated by our previous works \cite{yu2014circulant, cheng2015fast}, we present 
%an novel approach which improves the computational complexity from $\mathcal{O}(Dd)$ to $\mathcal{O(D \log d)}$, and the space complexity from $\mathcal{O}(D d)$ to $\mathcal{O}(D)$. 

Structured matrices have been used in the past to simulate a fully randomized matrix in many machine learning settings, including dimensionality reduction \cite{vybiral2011variant, hinrichs2011johnson, ailon2006approximate}, binary embedding \cite{yu2014circulant}, and deep neural networks \cite{cheng2015fast}. 
In addition, the fast Johnson-Lindenstrauss type transformations can also be used in speeding up the Random Fourier Features in kernel approximation \cite{le2013fastfood}, and locality sensitive hashing \cite{dasgupta2011fast}. 
This comes with an advantage that linear projection with a suitably designed structured matrix can be more more space and time efficient. 
In this section, we show that by imposing the \emph{circulant structure} on the projection matrix, one can achieve similar kernel approximation performance compared to the fully randomized matrix. The proposed approach reduces the computational complexity to $\mathcal{O}(k \log d)$, and the space complexity to $\mathcal{O}(k)$, when $k \geq d$.

\subsection{Circulant Nonlinear Maps}
A circulant matrix $\mathbf{R} \in \mathbb{R}^{d \times d}$ is a matrix defined by a vector $\mathbf{r}  = (r_0, r_1, \cdots,  r_{d-1})$:

\begin{small}
\begin{align}
\mathbf{R} = \circR(\mathbf{r}) :=
\begin{bmatrix}
r_0      & r_{d - 1} & \dots  & r_{2} & r_{1}  \\
r_{1}    & r_{0}     & r_{d-1} &         & r_{2}  \\
\vdots   & r_{1}& r_0    & \ddots  & \vdots   \\
r_{d-2}  &        & \ddots & \ddots  & r_{d-1}   \\
r_{d-1}  & r_{d-2} & \dots  & r_{1} & r_{0}
\end{bmatrix}.
\label{eq:cir}
\end{align}
\end{small}

Let $\mathbf{D}$ be a diagonal matrix with each diagonal entry being a Bernoulli  variable ($\pm 1$ with probability 1/2).
For $\mathbf{x} \in \mathbb{R}^d$, its $d$-dimensional circulant nonlinear map is defined as:
\begin{align}
Z(\mathbf{x}) = \cos (\mathbf{R} \mathbf{D} \mathbf{x}), \quad \mathbf{R} = \circR(\mathbf{r}).
\label{eq:def_circulant_ker}
\end{align}

The diagonal matrix $\mathbf{D}$ is required in order to improve the capacity when using a circulant matrix for both binary embedding \cite{yu2014circulant} and dimensionality reduction \cite{vybiral2011variant}.  Since multiplication with a Bernoulli  random diagonal matrix corresponds to random sign flipping of each element of vector $\mathbf{x}$, this can be done as a pre-processing step. To simplify the notation, we omit this matrix in the following discussion.

A circulant matrix has the space complexity of $\mathcal{O}(d)$ . The other advantage of using the circulant projection is that the Fast Fourier Transform (FFT) can be used to speed up the computation. Denote $\circledast$ as the operator of a circulant convolution. Based on the definition of a circulant matrix,
\begin{align}
\mathbf{R} \mathbf{x} = \br \circledast \mathbf{x}.
\end{align}
The convolution above can be computed more efficiently in the Fourier domain, using the Discrete Fourier Transform (DFT), for which a fast algorithm (FFT) is available. 
\begin{align}
Z (\mathbf{x}) = \phi \left( \mathcal{F}^{-1} ( \mathcal{F}({\br}) \circ \mathcal{F}(\mathbf{x})) \right),
\end{align}
where $\circ$ denotes the element-wise product. $\mathcal{F}(\cdot)$ is the operator of DFT, and $\mathcal{F}^{-1}(\cdot)$ is the operator of inverse DFT (IDFT).
As DFT and IDFT can be efficiently computed in $\mathcal{O}(d \log{d})$ time  with FFT \cite{oppenheim1999discrete}, the proposed approach has time complexity $\mathcal{O}(d \log{d})$. Note that the circulant matrix is never explicitly computed or stored. The circulant projections are always performed by using FFT.

What we described above assumed a circulant nonlinear map with $k = d$.
When $k < d$, we can still use the circulant matrix $\bR \in \mathbb{R}^{d \times d}$ with $d$ parameters, but the output is set to be the first $k$ elements in Equation \ref{eq:def_circulant_ker}. When $k > d$, we use multiple circulant projections, and concatenate their outputs. This gives the computational complexity $\mathcal{O}(k \log d)$, and space complexity $\mathcal{O}(k)$. Note that the DFT of the feature vector can be reused in this case.

\subsection{Randomized Circulant Nonlinear Maps}

Similar to the Random Fourier Features, one can generate the parameters of the circulant projection (\emph{i.e.}, the elements of vector $\mathbf{r}$ in Equation \ref{eq:def_circulant_ker}) via random sampling from a Gaussian distribution. We term such a method randomized circulant nonlinear maps. Figure \ref{fig:circulant_kernel_app} shows the kernel approximation MSE of the randomized circulant nonlinear maps and compares it with the Random Fourier Features. Although with much better computational and space complexity, it is interesting that the circulant nonlinear map can achieve almost identical MSE compared to the Random Fourier Features. 
%This  observation is compatible with using circulant matrices in other applications including dimensionality reduction \cite{vybiral2011variant} and binary embedding \cite{yu2014circulant}.

\begin{figure*}
\centering
\subfigure[\texttt{USPS}]
{\includegraphics[width = 5cm]{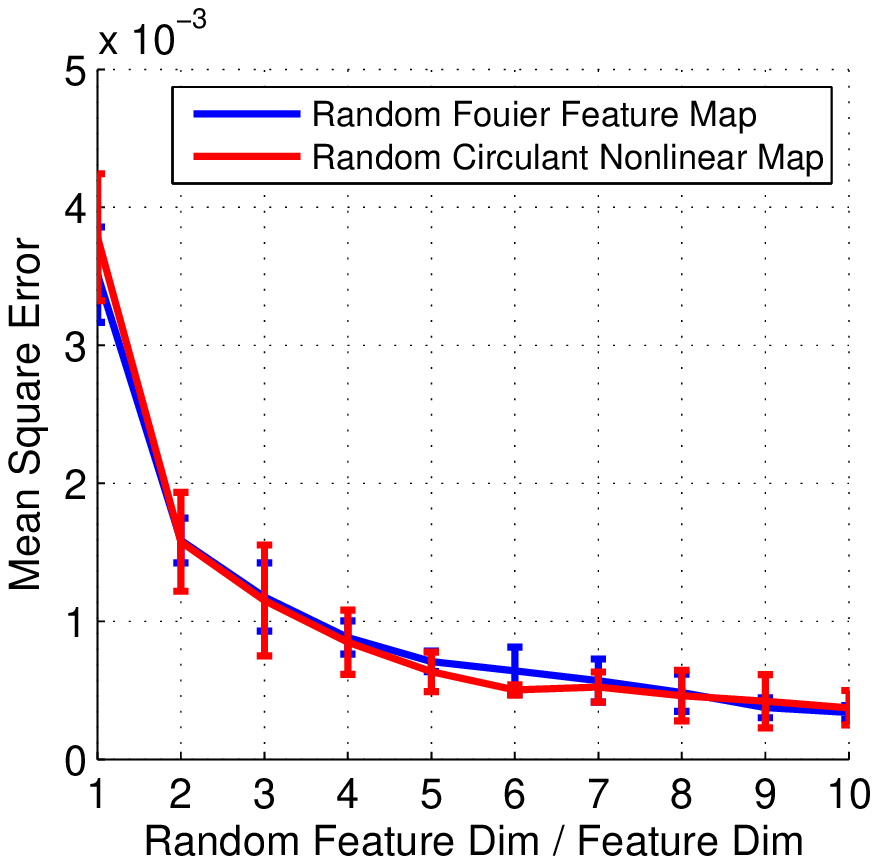}}
\subfigure[\texttt{CIFAR}]
{\includegraphics[width = 5cm]{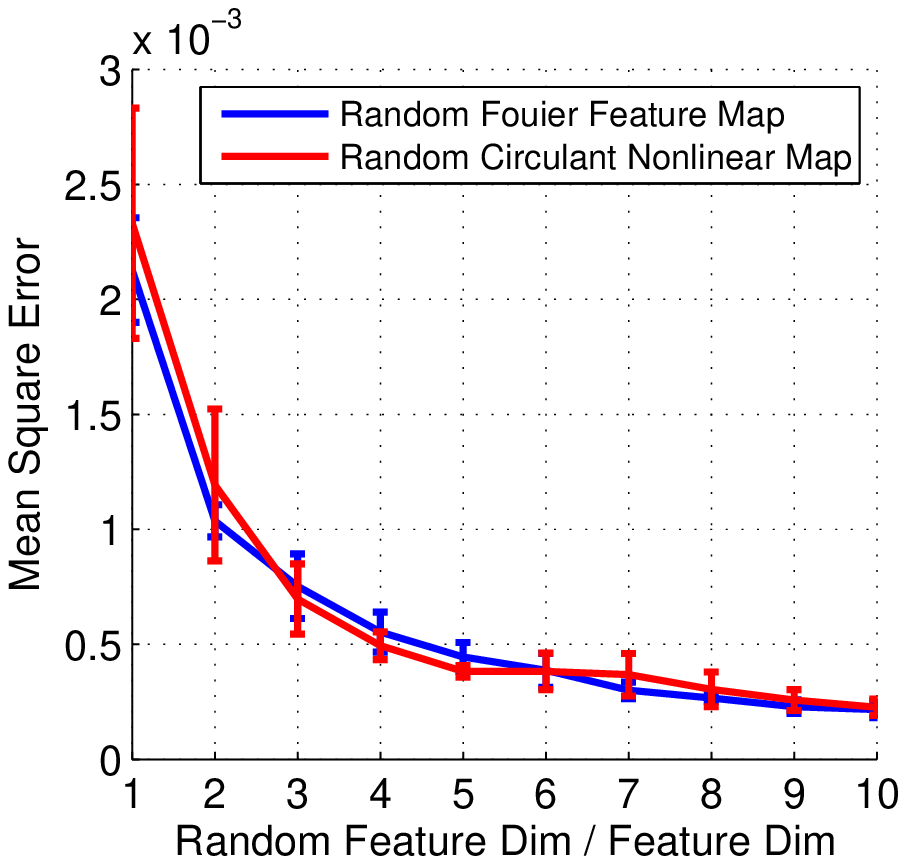}}
\subfigure[\texttt{MNIST}]
{\includegraphics[width = 5cm]{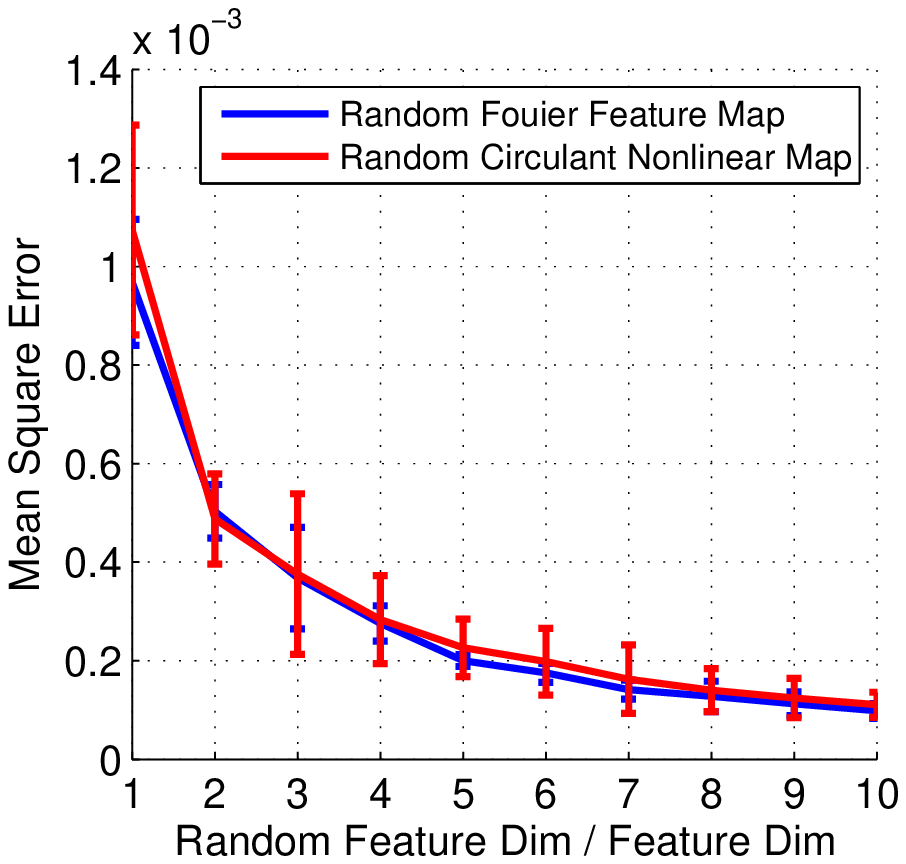}}
\caption{MSE of Random Fourier Feature, and randomized circulant nonlinear map.}
\label{fig:circulant_kernel_app}
\end{figure*}

\subsection{Optimized Circulant Nonlinear Maps}
Following the CNM framework, one can optimize the parameters in the projection matrix to improve the performance using alternating minimization procedure with the classification objective. The step to optimize classifier parameters $\bw$ is the same as described in section \ref{sec:alt_min}. The parameters of the projection are now given by circulant matrix $\bR$. Thus, the step of optimizing $\bR$  requires computing the gradient with respect to each element of vector $\mathbf{r}$ as:
\begin{equation}
\frac{\partial \bw^T \cos(\bR \bx)}{\partial r_i} = - \bw^T \left( \sin(\bR \bx) \circ s_{\rightarrow i} (\bx) \right) = - s_{\rightarrow i} (\bx)^T (\bw \circ \sin(\bR \bx)),
\label{eq:gradient}
\end{equation}
where $s_{\rightarrow i} (\cdot) : \mathbb{R}^d \rightarrow \mathbb{R}^d$,    circularly (downwards) shifts the vector $\bx$ by one element. 
Therefore,
\begin{align}
\nabla_{\br} (\bw^T \cos(\bR \bx)) &= -[s_{\rightarrow 0} (\bx), s_{\rightarrow 1} (\bx)
, \cdots, s_{\rightarrow (d-1)} (\bx)]^T (\bw \circ \sin (\bR \bx)) \\
&= - \circR(s_{\rightarrow 1} (\text{rev} (\bx) ))  (\bw \circ \sin(\bR \bx))  \nonumber \\
&= - s_{\rightarrow 1} (\text{rev} (\bx) ) \circledast (\bw \circ \sin(\br \circledast \bx)),  \nonumber
\end{align}
where $\text{rev} (\bx) = (x_{d-1}, x_{d-2}, \dots, x_0), \quad 
s_{\rightarrow 1} (\text{rev} (\bx) ) = 
(x_0, x_{d-1}, x_{d-2}, \cdots, x_1)$.

The above uses the same trick of converting the circulant matrix multiplication to circulant convolution. Therefore, computing the gradient of $\br$ takes only $\mathcal{O}(d\log d)$ time. The classification accuracy on three datasets with relatively large feature dimensions are shown in Table \ref{table:circulant_acc}. The randomized circulant nonlinear maps give similar performance to that from the Random Fourier Features but with much less storage and computation time. Optimization of circulant matrices tend to further improve the performance.

\begin{table}
\centering
\begin{tabular}{  l | c | c | c  }
    \hline
    Dataset (dimensionality $k$) & Random Fourier Feature & Circulant-random & Circulant-optimized \\ \hline
    \texttt{USPS} ($d$)  & $89.05 \pm 0.65$ & $89.40 \pm 1.02$  & \bm{$91.96 \pm 0.45$} \\ \hline
    \texttt{USPS} ($2d$) & $91.90 \pm 0.29$ & $91.87 \pm 0.11$  & \bm{$93.08 \pm 0.96$} \\ \hline
    \texttt{MNIST} ($d$) & $91.33 \pm 0.05$ & $91.01 \pm 0.03$ & \bm{$92.73 \pm 0.21$} \\ \hline
    \texttt{MNIST} ($2d$)& $92.95 \pm 0.42$ &  $93.22 \pm 0.30$ & \bm{$94.11 \pm 0.24$} \\ \hline
    \texttt{CIFAR} ($d$)  & $69.14 \pm 0.64$ &  $65.21 \pm 0.18$ & \bm{$71.17 \pm 0.68$}  \\ \hline
    \texttt{CIFAR} ($2d$) & \bm{$71.15 \pm 0.28$} & $68.56 \pm 0.70$  & $71.11 \pm 0.46$ \\ \hline
\end{tabular}
\caption{Classification accuracy (\%) using circulant nonlinear maps. The randomized circulant nonlinear maps have similar performance as of the Random Fourier Features but with significantly reduced storage and computation time. Optimization of circulant matrices tend to further improve the performance. }
\label{table:circulant_acc}
\end{table}

\section{Conclusion}
We have presented Compact Nonlinear Maps (CNM), which are motivated by the recent works on kernel approximation that allow very large-scale learning with kernels. This work shows that instead of using randomized feature maps,  learning the feature maps directly, even when restricted to shift-invariant kernel family, can lead to substantially compact maps with similar or better performance. The improved performance can be attributed mostly to  simultaneous learning of kernel approximation along with the classifier parameters. This framework can be seen as a shallow neural network with a specific nonlinearity (cosine) and provides a bridge between two seemingly unrelated streams of works. To make the proposed approach more scalable for high-dimensional data, we further introduced an extension, which imposes the circulant structure on the projection matrix. This improves the computation complexity from $\mathcal{O}(kd)$ to $\mathcal{O}(k \log d) $ and the space complexity from $\mathcal{O}(k d)$ to $\mathcal{O}(k)$, where $d$ is the input dimension, and $k$ is the output map dimension. In the future it will be interesting to explore if the complex data transforms captured by multiple layers of a deep neural network can be captured by learned nonlinear maps while remaining compact with good training and testing efficiency.

\vspace{+0.5cm}
\noindent\textbf{Acknowledgment.}
We would like to thank Weixin Li, David Simcha, Ruiqi Guo, and Krzysztof Choromanski for the helpful discussions. 

\bibliographystyle{plain}
\bibliography{design}

\end{document}